\setlist[enumerate]{nosep}   
\setlist[itemize]{nosep}
\newcommand{\com}[1]{}
\newcommand*{\R}{\mathds{R}}
\newcommand{\reg}{\mathcal{R}}
\newcommand{\rec}{\Phi}
\newcommand{\loss}{\mathcal{L}}
\newcommand{\ew}{\mathbb{E}}
\newcommand{\id}{\operatorname{Id}}
\newcommand{\Om}{\Omega}
\newcommand{\La}{\Lambda}
\newcommand{\F}{\mathcal{F}}
\newcommand{\T}{\mathcal{T}}
\newtheorem{theorem}{Theorem}
\newtheorem{condition}[theorem]{Condition}
\newtheorem{definition}[theorem]{Definiton}
\newtheorem{corollary}[theorem]{Corollary}
\newtheorem{lemma}[theorem]{Lemma}
\DeclarePairedDelimiter{\abs}{\lvert}{\rvert}
\DeclarePairedDelimiter{\norm}{\lVert}{\rVert}
\newcommand\set[1]{{\left\{#1\right\}}}
\DeclareMathOperator*{\argmin}{arg\,min}
\numberwithin{equation}{section}
\numberwithin{figure}{section}
\numberwithin{theorem}{section}
\newtheorem{example}[theorem]{Example}
\title{Locally-Supervised Global Image Restoration}
\date{\today}
\author[1]{Benjamin Walder}
\author[2]{Daniel Toader}
\author[2]{Robert Nuster}
\author[2]{Günther Paltauf}
\author[3]{Peter Burgholzer}
\author[3]{Gregor Langer}
\author[4]{Lukas Krainer}
\author[1,*]{Markus Haltmeier}
\affil[1]{Universität Innsbruck, 6020 Innsbruck, Austria}
\affil[2]{Universität Graz, 8010 Graz, Austria}
\affil[3]{Research Center for Non Destructive Testing GmbH, 4040 Linz, Austria}
\affil[4]{Prospective Instruments LK OG, 6850 Dornbirn, Austria}
\affil[*]{Corresponding author: markus.haltmeier@uibk.ac.at}
\begin{document}

\maketitle

\begin{abstract} 
We address the problem of image reconstruction from incomplete measurements, encompassing both upsampling and inpainting, within a learning-based framework. Conventional supervised approaches require fully sampled ground truth data, while self-supervised methods allow incomplete ground truth but typically rely on random sampling that, in expectation, covers the entire image. In contrast, we consider fixed, deterministic sampling patterns with inherently incomplete coverage, even in expectation. To overcome this limitation, we exploit multiple invariances of the underlying image distribution, which theoretically allows us to achieve the same reconstruction performance as fully supervised approaches. We validate our method on optical-resolution image upsampling in photoacoustic microscopy (PAM), demonstrating competitive or superior results while requiring substantially less ground truth data.

\medskip
\noindent
\textbf{Key words:} inverse problems, partially-supervised learning, self-supervised learning, multiple invariances, image upsampling, image inpainting, photoacoustic microscopy

\medskip
\noindent
\textbf{MSC codes:} 68T07, 94A08

\end{abstract}

\section{Introduction}\label{sec:1}

We consider the problem of reconstructing a signal or image $x \in \R^I$ from incomplete observations
\begin{equation} \label{eq:ip}
y = P_\Om x  \,,
\end{equation}
where $I$ denotes the set of all pixel indices, $\Om \subseteq I$ the subset of observed pixels and $P_\Om \colon \R^I \to \R^\Om$ the subsampling operator that restricts $x$ to measurements in $\Om$. 
When $\Om \neq I$, the restoration problem~\eqref{eq:ip} is inherently underdetermined and requires prior information on $x$ for its solution. 
This setting includes, among others, image upsampling with stride $s$, where $I = \{1, \dots, N\}^2$ and $\Om$ contains every $s$-th pixel in each dimension, as well as inpainting, where $\Om$ corresponds to arbitrary subsets of pixels for which information is missing. 
All of these cases can be described within the general framework of~\eqref{eq:ip}.

\paragraph{Prior work:}

A classical approach to solving the inverse problem~\eqref{eq:ip} is variational regularization~\cite{scherzer2009variational,aly2005image}. 
The idea is to introduce a regularization functional $\reg \colon \R^I \to [0, \infty]$ and to estimate the clean signal $x$ as minimizer of $\norm{ P_\Om x - y }_2^2 + \alpha \reg(x)$, where the parameter $\alpha > 0$ balances data fidelity and prior information. 
Common choices for $\reg$ include total variation, sparsity-promoting norms, or smoothness constraints. 
While effective in many cases, such handcrafted priors are often limited in their ability to capture the complex, data-driven structures inherent in modern imaging applications.

Recently, learning-based methods have shown superior performance. 
In the supervised setting, a reconstruction network $\rec \colon \R^\Om \to \R^I$ is trained on paired examples of $(x, y)$ to minimize the expected reconstruction error $\ew [ \norm{\rec(Y) - X}^2 ]$, 
where $X$ and $Y$ are considered as random variables subject to the forward model~\eqref{eq:ip}, and $\ew$ denotes the expectation with respect to their joint distribution. 
While highly effective, acquiring fully sampled ground truth images for all training examples is often expensive or infeasible.

Self-supervised approaches \cite{batson2019noise2self,gruber2024sparse2inverse,Hendriksen_2020,krull2019noise2voidlearningdenoising,millard2023theoretical,moran2020noisier2noise} overcome the need for fully sampled ground truth images and use only $Y$ for training. In a nutshell, these methods generate further downsampled data $P_\La Y$ and define a self-supervised reconstruction function by minimizing a self-supervised loss $\ew[\| \rec(P_\La Y) - Y \|^2]$. Theoretical results~\cite{millard2023theoretical,moran2020noisier2noise} show that such self-supervised reconstruction functions can recover the underlying image $X$. However, these results require that $\Om$ is randomly selected and, in expectation, covers the entire index set $I$. In practice, this necessitates measurements with varying sampling patterns and the ability to measure each pixel, which may be challenging. Crucially, our proposed method does not rely on randomness  of  $\Omega$ but instead adheres to a fixed, deterministic and incomplete sampling pattern.

\paragraph{Our contribution:}

To address the limitations of both supervised and self-supervised approaches, we propose a learning method that does not require a ground truth information on the full image while still enabling structured global upsampling. Instead of relying on complete ground truth images $x$, we use measurements of $x$ on only a small, fixed non-random subset $B \subseteq I$ of pixels to learn the upsampling task. The key idea is to exploit invariances inherent in the distribution of $X$, such as translational and rotational invariance. These invariances are naturally present, as an image remains the same regardless of its location or orientation. By leveraging such invariances, the observed pixels effectively generate multiple virtual training samples, enabling reconstruction quality comparable to fully supervised approaches while requiring ground truth data over a substantially smaller domain.

Specifically, our method relies on the following elements:  
\begin{itemize}
\item a fixed set $\Om \subseteq I$ used both for inference and learning,  
\item a fixed set $B \subseteq I$ used for supervision during training,  
\item a collection of translations $(T_\ell)_{\ell=1}^L$ with $(T_\ell(B))_{\ell=1}^L$ forming a partition of $I$  
\item invariance of $X$ and $\Om$ under each translation $T_\ell$.  
\end{itemize}  
{\color{black}More precise formulations of the required elements are given in Condition \ref{cond:T}. Note these assumptions, in particular, require that $B \cap \Omega^c \neq \emptyset$. Otherwise we would have $B \subseteq \Omega$, in which case no supervised data for the prediction task are collected at all. We also point out that $\Omega$ and $B$ have very different roles. On $\Omega$ we collect observed data for the actual inverse problem under study. On the other hand, on $B$ we collect additional data only during the training phase. Without essential changes, we could assume $B \cap \Omega = \emptyset$, as the supervision pixels in $B \cap \Omega$ do not provide additional information. However, we prefer the present setting where we formally aim at estimating $X$ given $X_\Omega$ instead of estimating $X_{\Omega^c}$ given $X_\Omega$, which in the noise-free inpainting case is equivalent. In particular, this excludes the extreme case $B \subseteq \Omega$, and $B \cap \Omega^c$ is required to have at least one element. However, as $(T_\ell(B))_{\ell=1}^L$ is assumed to form a partition of $I$, the set of invariances has to be large enough. Because  $(T_\ell(B \setminus \Omega))_{\ell=1}^L$ forms a partition of $I \setminus \Omega$, this gives the requirement $L \abs{B \setminus \Omega} = \abs{I \setminus \Omega}$.}

As our main theoretical result, we show that the supervised reconstruction function $\ew [ X | Y ]$ can be obtained by minimizing the loss $
\ew [ \norm{ P_B (\rec(Y)) - X_B}^2 ]$ which supervises only on $B$ rather than on the entire index set $I$.  
When $B$ constitutes only a small subset of $I$, this significantly reduces the supervision requirements.  
Moreover, as opposed to self-supervised approaches, our method does not involve any additional subsampling of the data and thus uses all available information at inference time.

{\color{black} From a practical point of view, our strategy requires more data pairs $(Y, X_B)$ and thus additional measurements $X_B$ on the supervision region $B$ during training compared to purely self-supervised approaches. However, the set of additional measurements $B \setminus \Omega$ can be much smaller than the full supervision $I \setminus \Omega$ needed in fully supervised approaches based on $(Y, X)$. Depending on the available invariances, the reduction in additionally required data can be substantial. Nevertheless, we show that our method yields the same optimal restoration function as supervised learning, which cannot be expected from purely self-supervised methods.            
}

Our strategy reduces acquisition time, eliminates the need for complex re-sampling procedures, and can be readily applied in practical experimental setups.  
We demonstrate the proposed method on image upsampling in optical-resolution photoacoustic microscopy (OR-PAM), and validate its performance on real experimental data.

\paragraph{Outline:}
The remainder of this paper is organized as follows. In Section~\ref{sec:prelim}, we introduce the studied image restoration problem and recall the main concepts of self-supervised upsampling. In Section~\ref{sec:main}, we present the proposed local supervision scheme and derive the main results. Section~\ref{sec:orpam} provides numerical experiments  on OR-PAM and comparison with global and patch-based upsampling methods. The paper concludes with a brief summary in Section~\ref{sec:conclusion}.

\section{Preliminaries}
\label{sec:prelim}

We target the image restoration problem~\eqref{eq:ip} of reconstructing a signal or image $x \in \R^I$ from incomplete observations $y = P_\Om x$, where the subsampling mask $\Om \subseteq I$ is deterministic and fixed and $I$ corresponds to all pixel locations. 

The subsampling operator is considered as  linear operator  $P_\Om \colon \R^I \to \R^\Om \colon x \mapsto (x_i)_{i \in \Om}$. Its adjoint $P_\Om^* \colon \R^\Om \to \R^I$ is referred to as the zero-upsampling operator, and its normal operator $P_\Om^* P_\Om$ as the masking operator. For $x \in \R^I$ and $y \in \R^\Om$ we have $(P_\Om^* y)_i = y_i$,  $(P_\Om^* P_\Om x)_i = x_i$ for $i \in \Om$, and $(P_\Om^* y)_i = (P_\Om^* P_\Om x)_i = 0$ for $i \notin \Om$, justifying their names. The masking operator can be written as the Hadamard (elementwise) product $P_\Om^* P_\Om x = M_\Om \odot x$ where $M_\Om \in \{0,1\}^I$ is the binary mask defined by $(M_\Om)_i = 1$ if and only if $i \in \Om$. The  restoration problem~\eqref{eq:ip} can be equivalently formulated as recovering $x \in \R^I$ from the masked version  $x_\Om = M_\Om \odot x$.  For simplicity we will proceed with the latter.

\subsection{Problem formulation}

We consider \eqref{eq:ip} in a probabilistic setting, where the unknown image $x$ is the realization of a random vector $X$ with unknown distribution. The observed (masked) data $x_\Om$ is the realization of the restricted random vector $X_\Om = M_\Om \odot x$. All expectations are taken with respect to the joint distribution $\pi$ subject to \eqref{eq:ip}, unless stated otherwise. For two random vectors $Y, Z$ we write $Y \stackrel{d}{=} Z$ and $Y = Z$ to denote equality in distribution and almost sure equality, respectively.

{\color{black}
The optimal restoration function $\hat{f} \colon \R^{I} \to \R^{I}$ is defined as the minimizer of the supervised loss function
\begin{equation}\label{eq:loss-super}
\loss(f) \coloneqq \ew\!\big[\|f(X_\Omega) - X\|^{2}\big] \,,
\end{equation}
where the minimum is taken over all equivalence classes (modulo almost-everywhere equality) of measurable functions $f \colon \R^{I} \to \R^{I}$. It is well known that the minimizer of \eqref{eq:loss-super} is given by the conditional expectation $\hat{f} = \ew[X \mid X_\Omega]$. The conditional expectation is unique up to a set of $X_\Omega$-probability zero. Thus, $\hat{f}(z)$ is well defined whenever $z = x_\Omega = M_\Omega \odot x$ for some image $x \in \R^{I}$ (again up to null sets). On the other hand, whenever $z$ does not vanish on $I \setminus \Omega$, the value $f(z)$ is not determined by \eqref{eq:loss-super} and can be chosen arbitrarily. For convenience, we still define $\hat{f}$ on the full space $\R^{I}$, keeping in mind that \eqref{eq:loss-super} specifies it only on the range of the masking operator $M_\Omega$. This choice aligns with the notation commonly used in the self-supervision literature and also reflects that the neural network architectures $(f_\theta)_{\theta \in \Theta}$ by default take inputs in $\R^{I}$. The actual function we use for inference based on data $y$ in \eqref{eq:ip} is $ \rec_\theta (y)= (f_\theta  \circ P_\Omega^{*})(y) =  f_\theta(x_\Omega)$; thus, it is not affected by how we extend $f_\theta$ outside the range of the masking operator $M_\Omega$.}

Computing the exact minimizer in \eqref{eq:loss-super} requires exact knowledge of the joint distribution $\pi$, which is usually not available. In the supervised learning paradigm, the minimizer of \eqref{eq:loss-super} is instead approximated by minimizing the empirical loss $\sum_{n=1}^N \norm{f(P_\Om^*(y_n)) - x_n}^2$ using samples $(y_n, x_n) \in \R^{\Omega} \times \R^I$ drawn independently from the joint distribution. This, however, requires fully sampled ground truth images $x_n$, which again might not be available in practice. 

Our goal is to estimate $\ew[X | X_\Om]$ without access to fully sampled instances of $X$, even during training.

\subsection{Self-supervised upsampling}

Collecting fully sampled data is often unavailable or time consuming or even impossible to acquire.  Self-supervised methods address this challenge by learning an upsampling function purely from subsampled data $y_n \in \R^\Om$ without the need for ground truth images. For self-supervised upsampling and related works see for example   \cite{batson2019noise2self,gruber2024sparse2inverse,Hendriksen_2020,krull2019noise2voidlearningdenoising,millard2023theoretical,moran2020noisier2noise}.  

In the context of image upsampling or inpainting, the self-supervision paradigm selects a second subsampling set $\Gamma \subseteq \Om$, which further degrades the partially observed data. This procedure defines synthetic training pairs $(P_\La y_n, y_n)$, where the original partially observed image $y_n$ serves as the ground truth, and $P_\La y_n$ acts as the input to the model. In the context of image restoration with known masks, the main theoretical justification is that $\ew[X \mid M_\Om \odot X]$ can, in fact, be constructed from data $(P_\La y, y)$. 

For the reader’s convenience, we recall a main theoretical result in this context due to \cite{millard2023theoretical}, where multiplicative variants of Noisier2Noise and their relation to SSDU (self-supervised learning via data undersampling, proposed in \cite{yaman2020self}) have been studied. A main recovery result states that
\begin{multline} \label{eq:ssdu-theory}
M_{I \setminus (\Om \cap \La)} \odot 
\ew [ X  \mid  M_\La \odot X_\Om ]
\\ =
M_{I \setminus (\Om \cap \La)} \odot
\Bigl( 
\argmin_{\rec} 
\ew \bigl[ \norm{ M_{I \setminus \La} \odot  
\bigl( M_\Om \odot \rec( M_\La \odot X_\Om ) - X_\Om \bigr) }^2 \mid M_\La \odot X_\Om \bigr]
\Bigr) \,,
\end{multline}
where $\Om, \La \subseteq I$ are assumed to be random subsets with $\ew[M_\Om] > 0$ and $\ew[M_\La] < 1$. \eqref{eq:ssdu-theory} states that, on the pixels outside $\Om \cap \La$, the ideal estimator $\ew[X \mid M_\La \odot X_\Om]$ for the given further masked data $M_\La \odot X_\Om$ can be found using only access to $X_\Om$ instead of $X$ during training. In our opinion, this is quite remarkable. A key assumption is the randomness of the set $\Om$, which, in expectation, covers the whole region $I$. In many practical scenarios, however, the set $\Om$ may be deterministic and fixed and, moreover, may not cover the entire imaging domain. Our method, on the other hand, allows for a fixed, non-random, incomplete design. To obtain sufficient information, we exploit invariances in the distribution of $X$.

\section{Locally-supervised global image restoration}
\label{sec:main}

The main idea for allowing supervision  on a small fixed  subset only is to exploit available invariances of the distribution. Throughout this section,  we take  $I = \{1, \dots, N\}^2$ and consider elements $x \in \R^I$ as $N$-periodic images in each dimension.  Further, $X$ is a random variable with values in $\R^I$, and $\Om \subseteq I$ is a fixed subsampling set.

\subsection{Auxiliary Results}

While we are particularly interested in translation invariance (see Definition \ref{def:trans}), we derive the main results for general linear and invertible operators $T \colon \R^I \to \R^I$.

\begin{definition}[$T$-invariant random vector]
We say that the random vector $X$ is invariant with respect to $T$ if $\mathbb{P}(X \in A) = \mathbb{P}(T(X) \in A)$ for all Borel sets $A \subseteq \R^I$.
\end{definition}

The invariance of $X$ with respect to $T$ means equality $X \stackrel{d}{=} T(X)$ in distribution, which is much weaker than equality $X = T(X)$ almost surely. For example, if $X \sim \mathcal{N}(0,1)$ and $T(X) = -x$, then $X(\omega) \neq T(X)(\omega)$ for almost every $\omega$, but their distributions coincide because the standard normal law is symmetric.

\begin{definition}[$T$-invariant subsampling]
Let $X$ be $T$-invariant.  
We say that $\Om \subseteq I$ is $T$-invariant if the pointwise-masked vector $M_\Om \odot X$ is also $T$-invariant.
\end{definition}

Thus, $\Om$ is $T$-invariant if $M_\Om \odot X \stackrel{d}{=} T(M_\Om \odot X)$.

\begin{example}[Translation invariance]
Let $X$ be invariant with respect to  horizontal translation $H$ by one pixel. If the mask $\Om$ consists of a single column, then $M_\Om \odot X$ contains only that column. Applying the translation to $M_\Om \odot X$ shifts the column by one, which is not equal in distribution to the original masked column. Hence $M_\Om$ is not $H$-invariant.   If instead the mask selects a single row, then $M_\Om \odot X$ is invariant under horizontal translation by one pixel, and thus $\Om$ is $H$-invariant.
\end{example}

The following Lemma is central to our proposal.  

\begin{lemma}\label{lem:inv}
If $X$ and $\Om$ are $T$-invariant, then $T(\ew[X | M_\Om \odot X]) = \ew[X | T(M_\Om \odot X)]$.     
\end{lemma}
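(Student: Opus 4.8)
The plan is to show that the Bayes reconstruction map is $T$-equivariant and to read the identity off from there. Write $Z \coloneqq M_\Om\odot X$ and let $g\colon\R^I\to\R^I$ be a Borel regression function of $X$ given $Z$, so that $\ew[X\mid Z]=g(Z)$ with $g(z)=\ew[X\mid Z=z]$; recall that $g$ is pinned down, $\mathrm{Law}(Z)$-a.e., by the \emph{joint} law of $(X,Z)$. With this notation the assertion is $T g(z)=g(Tz)$ for $\mathrm{Law}(Z)$-a.e.\ $z$: granting this, and using that $T(M_\Om\odot X)$ has the same law as $M_\Om\odot X$ (which is exactly the $T$-invariance of $\Om$), evaluation at $z=M_\Om\odot X$ gives $T(\ew[X\mid M_\Om\odot X])=T g(M_\Om\odot X)=g\bigl(T(M_\Om\odot X)\bigr)=\ew[X\mid T(M_\Om\odot X)]$, the last expression being $g$ applied to the transformed observation as in the statement.

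\noindent\textbf{First step: a joint invariance.}
I would first establish $(X, M_\Om\odot X)\stackrel{d}{=}(T X, T(M_\Om\odot X))$. Since $T$ is invertible, $X\stackrel{d}{=}T X$ also gives $X\stackrel{d}{=}T^{-1}X$, and pushing this forward through the Borel map $x\mapsto\bigl(T x,\, T(M_\Om\odot x)\bigr)$ yields $(T X, T(M_\Om\odot X))\stackrel{d}{=}\bigl(X,\, T(M_\Om\odot T^{-1}X)\bigr)$. Hence it suffices to check that $T(M_\Om\odot T^{-1}X)=M_\Om\odot X$ almost surely. For a translation $T$ this is transparent: $T$ merely permutes pixels, $T$-invariance of $\Om$ forces (barring degeneracies) $T(\Om)=\Om$, so $T$ commutes with pointwise masking, $T(M_\Om\odot x)=M_\Om\odot T x$ for every $x$, and the identity follows. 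For a general invertible $T$ I would reach the same identification by combining the $T$-invariance of $X$ with that of $\Om$.

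\noindent\textbf{Second step: equivariance of $g$.}
I would then compute $\ew[TX\mid TZ]$ in two ways. Because $T$ is a bi-measurable bijection of $\R^I$, $\sigma(TZ)=\sigma(Z)$, and since $T$ is linear it passes through the conditional expectation, so $\ew[TX\mid TZ]=T\,\ew[X\mid TZ]=T\,\ew[X\mid Z]=T g(Z)$. On the other hand, by the first step the pairs $(X,Z)$ and $(TX,TZ)$ share the same joint law, so the regression function of $TX$ given $TZ$ coincides with $g$ up to a $\mathrm{Law}(Z)$-null set, i.e.\ $\ew[TX\mid TZ]=g(TZ)$. Comparing the two expressions gives $g(TZ)=T g(Z)$ almost surely, i.e.\ the equivariance of $g$; the conclusion then follows as in the first paragraph.

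\noindent\textbf{Where the difficulty sits.}
Everything apart from the first step is routine manipulation of conditional expectations; the substantive point is the joint invariance. Distributional invariance of $X$ and of $M_\Om\odot X$ \emph{separately} does not in general force invariance of the joint law of $(X,M_\Om\odot X)$ — what is really needed is $T(M_\Om\odot X)=M_\Om\odot T X$ almost surely, not merely in distribution. This is automatic for coordinate permutations preserving $\Om$, which is the translation case the paper actually uses, and in the general linear setting it is exactly the place where the $T$-invariance hypothesis on $\Om$ is brought to bear.
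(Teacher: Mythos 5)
Your route is essentially the paper's own: the two ingredients — $\sigma(T(X_\Om))=\sigma(X_\Om)$ together with linearity of $T$ to pull $T$ through the conditional expectation, and the joint distributional identity $(X,\,M_\Om\odot X)\stackrel{d}{=}(T(X),\,T(M_\Om\odot X))$ to identify the two regression functions — appear in the same order in the paper's proof. Your explicit use of the regression function $g$ is a useful clarification of what $\ew[X\mid T(M_\Om\odot X)]$ has to mean for the statement to be non-trivial (as a conditional expectation with respect to $\sigma(T(M_\Om\odot X))=\sigma(M_\Om\odot X)$ it would collapse to $\ew[X\mid M_\Om\odot X]$), but it does not change the substance of the argument.

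Where you diverge is in scrutinizing the joint-invariance step, and your caveat there is well founded: the paper asserts $(X,X_\Om)\stackrel{d}{=}(T(X),T(X_\Om))$ directly from the two marginal invariances, with no further argument, whereas — exactly as you say — what is really needed is the almost-sure commutation $T(M_\Om\odot X)=M_\Om\odot T(X)$ (equivalently, via your push-forward computation, $T(M_\Om\odot T^{-1}X)=M_\Om\odot X$ a.s.). For a general linear invertible $T$ this cannot be deduced from the stated hypotheses, and in fact the lemma in that generality fails: take $I=\{1,2\}$, $\Om=\{1\}$, $T(x_1,x_2)=(x_1+x_2,\,-x_2)$, and $X$ with $X_2$ uniform on $\{\pm 1\}$, $X_1\mid X_2=1\sim\mathcal{N}(0,1)$, $X_1\mid X_2=-1\sim\mathcal{N}(1,1)$. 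Then $X\stackrel{d}{=}T(X)$ and $T(M_\Om\odot X)=M_\Om\odot X$, so both invariance hypotheses hold, yet $\ew[X\mid X_\Om]=(X_1,h(X_1))$ with $h(t)=\bigl(\phi(t)-\phi(t-1)\bigr)/\bigl(\phi(t)+\phi(t-1)\bigr)\not\equiv 0$, which is not fixed by $T$, so the asserted identity is violated. In the translation (pixel-permutation) setting that the paper actually uses, the hypothesis $M_\Om\odot X\stackrel{d}{=}T(M_\Om\odot X)$ combined with $X\stackrel{d}{=}T(X)$ forces the underlying index permutation to map $\Om$ onto itself whenever no coordinate of $X$ is a.s.\ zero, so $T$ commutes with the masking operator identically and your first step closes — again exactly as you observe. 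So your proof is complete in the setting in which the lemma is applied later (Corollary on translations and the OR-PAM experiments), and the gap you flag for general $T$ is not a defect of your write-up but a genuine gap in the lemma's stated generality; the clean fix is to add the a.s.\ commutation $T(M_\Om\odot X)=M_\Om\odot T(X)$ (or $T(\Om)=\Om$ in the permutation case) as a hypothesis.
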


\begin{proof}
Write $X_\Om = M_\Om \odot X$. Since $T$ is an invertible transform, $\sigma(T(X_\Om)) = \sigma(X_\Om)$. With the linearity of the conditional expectation, this gives $\ew[T(X) | T(X_\Om)] = \ew[T(X) | X_\Om] = T(\ew[X | X_\Om])$. Because $X$ is $T$-invariant and $\Om$ is $T$-invariant,  $(X, X_\Om) \stackrel{d}{=} (T(X), T(X_\Om))$. Hence  $\ew[T(X) | T(X_\Om)] = \ew[X | T(X_\Om)]$, which concludes the proof. 
\end{proof}

\begin{definition}
A function $ f \colon \R^I \to \R^I$ is called $T$-equivariant if $f \circ T  =   T  \circ f$.    
\end{definition}

Thus Lemma~\ref{lem:inv} shows that the optimal upsampling function $\ew[X | M_\Om \odot X]$ is $T$-equivariant. In other words, applying $T$ before or after upsampling via $f$ yields the same result.

\subsection{Main results}
\label{sec:main-res}

Our aim is the determination of $\ew[X | X_\Om]$ from access to the ground truth $X$ on a subset $B \subseteq I$ only. This will be achieved under the following assumptions.  

\begin{condition}[Locally-supervised global restoration framework] \mbox{} \label{cond:T}
\begin{enumerate}[label=(A\arabic*)]
\item \label{cond:T1} $\T \coloneqq (T_\ell)_{\ell=1}^L$ is a family of linear and invertible operators on $\R^I$  with $T_1= \id$. 
\item \label{cond:T2} $\Om, B \subseteq I$ are fixed subsampling and supervision sets, respectively. 
\item \label{cond:T3} $X$ and $\Om$ are $T_\ell$-invariant for all $\ell \in \set{1, \dots, L}$.  

\item \label{cond:T4} $\sum_{\ell=1}^LT_\ell^{-1}\big(M_B\odot T_\ell(X)\big)=X$ with $T^{-1}_\ell$ being the inverse operator to $T_\ell$.

\item \label{cond:T5} $X_\Om = M_\Om \odot X$ is the observed data.
\end{enumerate}
\end{condition}

{\color{black}
Let us briefly discuss Condition \eqref{cond:T}.  Items \ref{cond:T1}, \ref{cond:T2}, and \ref{cond:T5} determine the setting we work in and the observed data, and are therefore not restrictive. By contrast, \ref{cond:T3} and \ref{cond:T4} may appear to impose potentially restrictive conditions on the supervision set $B$ and on the class $\mathcal{T}$. However, this is not the case; we consider the interplay between these conditions, as reflected in \ref{cond:T3} and \ref{cond:T4}, to be quite natural. Our aim is to reduce the supervision set from the full set $I$ of pixels to a potentially small one, and the way to do so is to exploit invariances present in the dataset $X$. If there is no invariance, our method cannot reduce the supervision set. Formally, with only the identity mapping $T_1 = \id$ as an invariance, one must take $\Omega = I$; nothing is gained. If, however, there is at least one additional (nontrivial) invariance $T_2$, we may consider $\T = \{T_1, T_2\}$. As a concrete example, let $T_2$ be a periodic shift by $N/2$ pixels in the horizontal dimension. Then we can take the supervision set $\Omega$ as the left half of the image, and \ref{cond:T3} and \ref{cond:T4} are satisfied. The more general case simply allows multiple invariances. While having more invariances may be harder to achieve in practice, in that case \ref{cond:T4} becomes easier to satisfy.
}

A function $f \colon \R^I \to \R^I$ is called $\T$-equivariant if it is $T_\ell$-equivariant for all $\ell = 1, \dots, L$. The set of all measurable $\T$-equivariant functions will be denoted by $\F(\R^I;\T)$.

\begin{theorem}[Locally-supervised global image restoration] \label{thm:main}
Let $\T$, $\Om$, $B$, $X$, $X_\Om$ satisfy \ref{cond:T1}-\ref{cond:T5}. Then $\ew[X | X_\Om]$ is $\T$-equivariant and
\begin{equation} \label{eq:main}
\ew[X | X_\Om] = \argmin_{f \in \F(\R^I;\T)} \ew\bigl[\norm{M_B \odot f(X_\Om) - X_B}^2\bigr] \,.
\end{equation}
\end{theorem}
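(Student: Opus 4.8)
Fix a measurable version $\hat\Phi\colon\R^I\to\R^I$ of the conditional-expectation function, so that $\hat\Phi(X_\Om)=\ew[X\mid X_\Om]$ almost surely; recall that $\hat\Phi$ is the (almost surely unique) minimizer of the global loss $\ew[\norm{f(X_\Om)-X}^2]$ over all measurable $f$. The plan is to (i) show $\hat\Phi\in\F(\R^I;\T)$, so that $\hat\Phi$ is admissible in \eqref{eq:main}; (ii) use the invariances to rewrite the local loss $\ew[\norm{M_B\odot f(X_\Om)-X_B}^2]$ of an equivariant $f$ as an average of the same quantity transported by the $T_\ell$; and (iii) combine this with the orthogonality (tower) property of the conditional expectation and the covering assumption \ref{cond:T4} to conclude that $\hat\Phi$ is the \emph{unique} minimizer over $\F(\R^I;\T)$.

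For (i): by \ref{cond:T3} one has $(X,X_\Om)\stackrel{d}{=}(T_\ell(X),T_\ell(X_\Om))$ for every $\ell$ (this is precisely the distributional identity used inside the proof of Lemma~\ref{lem:inv}); since conditional expectations depend only on the joint law, $\ew[X\mid X_\Om=z]=\ew[T_\ell(X)\mid T_\ell(X_\Om)=z]=T_\ell\bigl(\ew[X\mid X_\Om=T_\ell^{-1}(z)]\bigr)$ for $P_{X_\Om}$-a.e. $z$, i.e. $\hat\Phi(T_\ell z)=T_\ell\hat\Phi(z)$. Hence $\hat\Phi$ is $\T$-equivariant, which is exactly the assertion following Lemma~\ref{lem:inv}.

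For (ii): let $g\in\F(\R^I;\T)$ and $\ell\in\{1,\dots,L\}$. Applying $(X,X_\Om)\stackrel{d}{=}(T_\ell(X),T_\ell(X_\Om))$ to the measurable map $(a,b)\mapsto\norm{M_B\odot(g(b)-a)}^2$ and then using equivariance $g(T_\ell b)=T_\ell g(b)$ gives $\ew[\norm{M_B\odot(g(X_\Om)-X)}^2]=\ew[\norm{M_B\odot T_\ell(g(X_\Om)-X)}^2]$; averaging over $\ell=1,\dots,L$ (with $T_1=\id$) yields
\begin{equation*}
\ew\bigl[\norm{M_B\odot(g(X_\Om)-X)}^2\bigr]=\tfrac1L\textstyle\sum_{\ell=1}^{L}\ew\bigl[\norm{M_B\odot T_\ell\bigl(g(X_\Om)-X\bigr)}^2\bigr].
\end{equation*}
For (iii): for arbitrary measurable $f$ put $W\coloneqq f(X_\Om)-\hat\Phi(X_\Om)$ (which is $\sigma(X_\Om)$-measurable) and $R\coloneqq X-\hat\Phi(X_\Om)$ (which satisfies $\ew[R\mid X_\Om]=0$). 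Writing $A_\ell$ for the linear map $w\mapsto M_B\odot T_\ell w$, the cross term $\ew[\inner{A_\ell W,A_\ell R}]$ vanishes after conditioning on $X_\Om$, so $\ew[\norm{A_\ell(f(X_\Om)-X)}^2]=\ew[\norm{A_\ell W}^2]+\ew[\norm{A_\ell R}^2]\ge\ew[\norm{A_\ell R}^2]$, with equality iff $M_B\odot T_\ell W=0$ a.s. Now take $f\in\F(\R^I;\T)$ and set $\loss_B(f)\coloneqq\ew[\norm{M_B\odot f(X_\Om)-X_B}^2]$: applying (ii) to $g=f$, then the previous inequality termwise, then (ii) to $g=\hat\Phi$ (admissible by (i), using $R=X-\hat\Phi(X_\Om)$) yields $\loss_B(f)\ge\loss_B(\hat\Phi)$, so $\hat\Phi$ minimizes $\loss_B$ over $\F(\R^I;\T)$. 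For uniqueness, equality $\loss_B(f)=\loss_B(\hat\Phi)$ forces $M_B\odot T_\ell W=0$ a.s. for every $\ell$; applying $T_\ell^{-1}$ and summing, assumption \ref{cond:T4} gives $W=\sum_{\ell}T_\ell^{-1}\bigl(M_B\odot T_\ell W\bigr)=0$ a.s., i.e. $f(X_\Om)=\hat\Phi(X_\Om)$ a.s. This establishes \eqref{eq:main}, and the $\T$-equivariance of $\ew[X\mid X_\Om]$ is (i).

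\textbf{Main obstacle.} The substantive point is the final step: upgrading ``$M_B\odot T_\ell W=0$ for all $\ell$'' to ``$W=0$'', which is exactly where \ref{cond:T4} (that the sets $T_\ell^{-1}(B)$ cover $I$) is indispensable and which is what turns mere optimality into uniqueness over $\F(\R^I;\T)$ — dropping the equivariance constraint, the local loss is minimized by any $f$ agreeing with $\hat\Phi$ on $B$. A secondary technicality is that $\hat\Phi$ is only $P_{X_\Om}$-a.e. defined, so the equivariance of (i) and the evaluations $g(T_\ell X_\Om)$ in (ii) must be handled almost surely, using $X_\Om\stackrel{d}{=}T_\ell(X_\Om)$ from \ref{cond:T3}; when the $T_\ell$ are moreover orthogonal (as for translations) step (ii) simplifies to $\loss_B(f)=\tfrac1L\ew[\norm{f(X_\Om)-X}^2]$ for $f\in\F(\R^I;\T)$, from which the conclusion is immediate.
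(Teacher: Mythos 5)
Your proof is correct, but it takes a genuinely different route from the paper's. Both arguments share the transport step --- using the joint identity $(X,X_\Om)\stackrel{d}{=}(T_\ell(X),T_\ell(X_\Om))$ together with equivariance of the candidate function to move the mask $M_B$ across the translates --- but they use it differently. The paper shows that on $\F(\R^I;\T)$ the local loss coincides (up to the factor $L$) with the fully supervised loss: it expands $\ew[\norm{f(X_\Om)-X}^2]$ as $\sum_\ell\ew[\norm{M_B\odot T_\ell(f(X_\Om)-X)}^2]$ (an orthogonal norm decomposition that implicitly uses, beyond \ref{cond:T4}, that the $T_\ell$ preserve the norm and that the pieces have disjoint supports --- automatic for translations), transports each summand back, and concludes because $\ew[X\mid X_\Om]$ minimizes the supervised loss and is equivariant by Lemma~\ref{lem:inv}. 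You instead apply the orthogonality (tower) property of the conditional expectation termwise to the transported local losses, obtaining $\loss_B(f)\ge\loss_B(\hat\Phi)$ without ever needing the norm-splitting identity, and you invoke \ref{cond:T4} only at the end --- read as the operator identity $\sum_\ell T_\ell^{-1}(M_B\odot T_\ell(\cdot))=\id$ applied to $W=f(X_\Om)-\hat\Phi(X_\Om)$ rather than to $X$ --- to upgrade optimality to almost-sure uniqueness of the minimizer. What this buys: your argument goes through for general invertible linear $T_\ell$ in the framework \ref{cond:T1}--\ref{cond:T5}, where the paper's norm decomposition is not automatic, and it makes explicit the uniqueness claim that is left implicit in the paper's chain of $\argmin$ identities; the paper's proof is shorter and more transparent in the translation-invariant case. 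The only caveats are shared with the paper: deducing the joint identity $(X,X_\Om)\stackrel{d}{=}(T_\ell(X),T_\ell(X_\Om))$ from the marginal invariances in \ref{cond:T3} (this is exactly the step inside Lemma~\ref{lem:inv}), and reading \ref{cond:T4} as an identity valid for arbitrary vectors rather than only for $X$, which the paper's own decomposition of $\norm{f(X_\Om)-X}^2$ also requires.
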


\begin{proof}
The equivariance of $\ew[X | X_\Om]$ follows from Lemma~\ref{lem:inv}.
Further, from the $\T$-equivariance of $\ew[X | X_\Om]$ and conditions \ref{cond:T3}, \ref{cond:T4}, we get
\begin{align*}
\ew[X | X_\Om] &= 
\argmin_{f \in \F(\R^I;\T)} 
\ew \bigl[ \norm{f(X_\Om) - X}^2 \bigr] \\
&= 
\argmin_{f \in \F(\R^I;\T)} 
\biggl[ \sum_{\ell}
\ew \norm{ M_{B} \odot T_\ell(f(X_\Om)) - M_{B} \odot T_\ell(X) }^2 \biggr] \\
&= 
\argmin_{f \in \F(\R^I;\T)} 
\biggl[ \sum_{\ell}
\ew \norm{ M_{B} \odot f(T_\ell(X_\Om)) - M_{B} \odot T_\ell(X) }^2 \biggr] \\
&= 
\argmin_{f \in \F(\R^I;\T)} 
\biggl[ \sum_{\ell}
\ew \norm{M_B \odot f(X_\Om) - M_B \odot X }^2 \biggr] \\
&= 
\argmin_{f \in \F(\R^I;\T)} 
\ew\bigl[ \norm{ M_B \odot f(X_\Om) - M_B \odot X }^2\bigr],
\end{align*}
which is~\eqref{eq:main}.
\end{proof}

{\color{black} Recall that $\hat{f} = \mathbb{E}\!\left[X \mid X_{\Omega}\right]$ is the minimizer of the supervised loss \eqref{eq:loss-super}, which is unique up to sets of $X_{\Omega}$-probability zero. Essential non-uniqueness can arise only when $\hat{f}$ is evaluated at elements $z$ that have zero probability of occurring as $X_{\Omega}$. We emphasize that in such a situation the right-hand side of \eqref{eq:main} also does not provide additional information about $z$. This can be interpreted as follows: for strictly out-of-distribution samples, neither of the losses \eqref{eq:loss-super} or \eqref{eq:main} provides any information and any prediction is feasible. Conversely, in all other cases where predictions are made on feasible elements, both loss functions yield the same unique minimizer.}

\subsection{Translation-invariant case}

\begin{definition}[Translation Operators] \label{def:trans}
We define the translation operators in the horizontal and vertical directions, $H, V \colon \R^I \to \R^I$, by
\begin{align}
(H(x))_{i_1, i_2} &= x_{i_1+1, i_2},\\
(V(x))_{i_1, i_2} &= x_{i_1, i_2+1},
\end{align}
where all indices are taken $N$-periodically.
We denote by $\T_{H,V}$ the set of all translations  $T = H^a \circ V^b$ for some $a, b =\set{0, \dots,   N-1}$.  
\end{definition}

\begin{condition}[Translation-invariant framework] \mbox{} \label{cond:T}
\begin{enumerate}[label=(B\arabic*)]
\item \label{cond:TI1} $\T \coloneqq (T_\ell)_{\ell=1}^L$ is a family of translations  in $\T_{H,V}$   with $T_1= \id$. 
\item \label{cond:TI2} $\Om, B \subseteq I$ are fixed subsampling and supervision sets, respectively. 
\item \label{cond:TI3} $X$ and $\Om$ are $T_\ell$-invariant for all $\ell \in \set{1, \dots, L}$.  

\item \label{cond:TI4} $\{T_\ell(B)\}_{\ell=1}^L$ is a partition of $I$.

\item \label{cond:TI5} $X_\Om = M_\Om \odot X$ is the observed data.
\end{enumerate}
\end{condition}

\begin{corollary}[Locally-supervised global image restoration] \label{cor:main}
Let $\T$, $\Om$, $B$, $X$, $X_\Om$ satisfy \ref{cond:TI1}-\ref{cond:TI5}. Then $\ew[X | X_\Om]$ is $\T$-equivariant and
\begin{equation} \label{eq:main-TI}
\ew[X | X_\Om] = \argmin_{f \in \F(\R^I;\T)} \ew\bigl[\norm{M_B \odot f(X_\Om) - X_B}^2\bigr] \,.
\end{equation}
\end{corollary}

\begin{proof}
According to \ref{cond:TI1}, any $T_\ell$ is a linear and invertible operator, and \ref{cond:TI4} implies \ref{cond:T4}. Thus, Corollary~\ref{cor:main} is an immediate consequence of Theorem~\ref{thm:main}.
\end{proof}

Theorem~\ref{thm:main} and Corollary~\ref{cor:main} show that supervision on the subset $B$ is sufficient to obtain the ideal restoration function $\ew[X | X_\Om]$ on the whole domain. According to conditions~\ref{cond:T4}, \ref{cond:TI4} , the more invariances we have and exploit, the smaller the supervision set can be.  In the application presented in Section~\ref{sec:orpam}, we use 4 translations, which reduces the number of pixels required for supervision by a factor of 4.

{\color{black} Let us discuss some practical implications and features of the equality \eqref{eq:main-TI}. When we choose the supervision set $B \subsetneq I$, the optimal restoration function $\mathbb{E}\!\left[X \mid X_{\Omega}\right]$ can be learned using only instances of $X_{B}$, which form a smaller dataset than instances of the original $X$. In particular, this avoids collecting any data on the complementary domain $I \setminus B$, which is crucial if the complementary domain is inaccessible due to practical constraints, or if collecting data on different domains requires adjusting the device, which may be costly, time-consuming, and error-prone. Besides these practical advantages, it introduces the numerical challenge of minimizing over $\mathcal{T}$-equivariant functions $f$  rather than arbitrary functions. Several strategies are possible to guarantee $\mathcal{T}$-equivariance during numerical optimization. In the numerical implementation presented below, we start with a standard architecture (specifically, a U-Net) and then adjust the main building blocks (in this case, the convolutions) to be translation-equivariant, for example by enforcing appropriate boundary conditions. Other possible choices are currently under investigation and will be presented in future work.}

\section{Application to accelerate OR-PAM}
\label{sec:orpam}

In this section, we present an application of our theory to OR-PAM. We first provide a brief recap of OR-PAM and then present  specific sampling strategies together with numerical results and comparison.

\subsection{OR-PAM working principle}

OR-PAM is a high-resolution imaging modality that leverages the photoacoustic effect to visualize optical absorption contrasts near the surface of biological tissues with micrometer-scale lateral resolution \cite{PARK2025100739, ParkJeongwoo25}. The underlying mechanism of OR-PAM is the photoacoustic effect: when short laser pulses are absorbed by tissue chromophores such as hemoglobin, melanin, DNA/RNA or lipids, rapid thermoelastic expansion induces ultrasonic waves. These pressure waves are then detected by a focused ultrasonic transducer placed in proximity to the sample. In contrast to standard acoustic-resolution PAM, where spatial resolution is limited by the acoustic focus, OR-PAM achieves superior lateral resolution by optically focusing the excitation beam to a diffraction-limited spot.  
The typical transmission mode setup and explanation are shown in Figure \ref{fig:1}; a precise description is given in \cite{paltauf2023model}.

\begin{SCfigure}[2][htb!]
    \centering
    \includegraphics[width=0.31\linewidth]{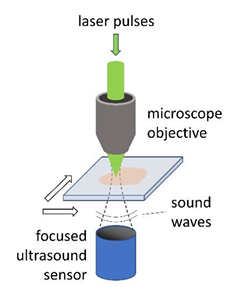}
    \caption{In OR-PAM, laser pulses from a pulsed light source and an ultrasound detector are focused on the same axis and collect highly resolved, pixel-wise image information of the sample surface down to a depth limited by the ballistic penetration depth. Taking the maximum amplitude value at a specific point in time or defined time window from the measured temporal signals at each scan position, one obtains a 2D image x of the absorbing structures close to the surface. Raster scanning is time-consuming, and the number of sampling points represents a trade-off between speed and sufficient sampling density.}
    \label{fig:1}
\end{SCfigure}

In a nutshell, the OR-PAM measurement setup provides an image $x \in \R^I$, where the data at each pixel $i \in I$ requires a specific measurement by rastering the measurement beam along the probe surface. However, scanning each pixel separately is time-consuming and critical in time-sensitive applications. Moreover, collecting many samples may damage the object. Thus, accelerating the process by scanning only a subset $\Om \subseteq I$ of all pixels is beneficial in many respects. This leads to the upsampling problem \eqref{eq:ip}.

\subsection{Sparse-Dense OR-PAM}

We employ the sparse-dense sampling scheme proposed in \cite{walder2025self}, a regular sampling scheme that is invariant with existing OR-PAM settings. We assume that full sampling is used with an even number of pixels in each dimension, and take the sub-sampling set to consist of every second pixel in each dimension. Supervised measurements are only taken in one fixed quadrant, here chosen to be the upper left corner. Thus we have               
\begin{align*}
I &\coloneqq \set{1, \dots, N} \times \set{1, \dots, N},\\ 
\Om &\coloneqq \set{1, 3, \dots, N-1} \times \set{1, 3, \dots, N-1}, \\
B &\coloneqq \set{1, \dots, N/2} \times \set{1, \dots, N/2} \,. 
\end{align*}
Data $X_\Omega$ (sparsely subsampled) is then used for inference, and pairs $(X_B, X_\Omega)$ for training, referred to as sparse-dense sampling.

In order to apply the theory of the previous section, we need translation operators $(T_\ell)_\ell$ such that $\{T_\ell(B)\}_\ell$ forms a partition of $I$. For that, we can choose four translations:
\begin{itemize}
  \item $T_1 = \id$ (the identity),
  \item $T_2 = H^{N/2}$ (horizontal translation by $N/2$ pixels),
  \item $T_3 = V^{N/2}$ (vertical translation by $N/2$ pixels),
  \item $T_4 = H^{N/2} \circ V^{N/2}$ (translation by $N/2$ pixels in both directions).
\end{itemize}
{\color{black} The partition condition \ref{cond:TI4} is in this case readily verified. Choices without additional supervised data, such as $\tilde B = \Omega$ or $\tilde B  = \Omega \cap B$, are not possible, as this would contradict \ref{cond:TI4}.}
Other invariances are also possible, for example, rotating the image by multiples of $90$ degrees; however, in this paper, we stick to translation invariances only. {\color{black}  The numerical realization of these broader invariances presents additional challenges; appropriate strategies will be developed in future work.}

{\color{black} Recall the fundamental difference between the observation set $\Omega$ and the supervision set $B$. The observation set $\Omega$ is required to be invariant under the transformations $(T_\ell)_\ell$. The supervision set, on the other hand, is required to break this invariance: applying $(T_\ell)_\ell$ to $B$ should cover the full image domain $I$. The aim of our sparse–dense sampling scheme is to reduce the size of the supervision region compared to the full supervision region $I$ used in standard supervised learning. To this end, we exploit the invariances $T_1, T_2, T_3, T_4$.}

Let $X \in \R^I$ model fully sampled OR-PAM images with $\T = (T_1, T_2, T_3, T_4)$-invariant distribution. Then, according to Corollary~\ref{cor:main}, we have $\ew[X | X_\Om] = \argmin_{f \in \F(\R^I;\T)} \norm{M_B \odot f(X_\Om) - X_B}^2$.  
Thus, the ideal restoration function $\ew[X | X_\Om]$ can be obtained from locally supervised data $X_B$ from a fixed design. To realize $\ew[X | X_\Om]$, we generate random pairs $(x_{B, n}, x_{\Om, n})$ and minimize the empirical risk  
\begin{equation}\label{eq:inference}
   \loss(\theta) \coloneqq \sum_n \norm{M_B \odot f_\theta(x_{\Om, n}) - x_{B, n}}^2
\end{equation}
over a parameterized class of $\T$-equivariant functions $f_\theta \colon \R^I \to \R^I$.

\subsection{Implementation details}

\paragraph{Architecture:}
In our numerical experiments, we employ a U-Net-based architecture \cite{jiangtao2025comprehensivereviewunetvariants} applied to  zero-filled data. The architecture consists of 21 convolutional layers, organized into three downsampling blocks (each comprising two convolutional layers followed by a strided convolution for downsampling), a bottleneck with two convolutional layers and a dropout layer, and three upsampling blocks each of them composed of one transposed convolutions for upsampling and two convolutional layers. The dropout layer at the stage of the bottleneck zeros out feature maps with a $50\%$-chance, which avoids overfitting to the training data. After every convolution a ReLu activation function is implemented to introduce nonlinearities. Skip connections link corresponding encoder and decoder stages. All that is implemented in python with the pytorch package and Adam optimization \cite{kingma2017adammethodstochasticoptimization}.

\paragraph{Equivariance:}
{\color{black} 
To ensure that the network satisfies the condition of a $\T$-equivariant function, we analyze the translation-invariance properties of the architecture in detail and adapt elements when necessary. Specifically, this means the following.} 

\begin{itemize}
    \item  {\color{black}  Without boundary effects, convolutional layers are translation-equivariant because the same filter is applied with shared weights across all spatial positions, so a shift in the input produces an equivalent shift in the feature maps. When restricting to a finite domain with standard settings, this property holds only in the interior of the image; at the boundaries, padding and edge effects may break perfect equivariance. To overcome this issue, in our implementation all convolutional layers use \texttt{padding\_mode=circular}, which ensures that the convolution operator is translation-equivariant with periodic boundary conditions.} For convolutions with stride \(s = 1\), translation invariance holds for all integer shifts. However, the encoder path contains three strided convolutional layers (\texttt{stride}=2) acting as downsampling operators. Similarly, the decoder path contains three strided transposed convolutional layers (\texttt{stride}=2) acting as upsampling operators. For these layers, translation invariance is guaranteed only for translations that are multiples of the stride. Otherwise, aliasing effects occur, which break exact invariance. Since there are three downsampling operations in the encoder, the overall stride is $2\cdot2\cdot2=8$.

\item All nonlinearities in the network are implemented using the ReLU activation function. Since ReLU acts pointwise on each spatial position and channel, it commutes exactly with translations. Concatenation operations in the skip connections operate along the channel axis only. Therefore, they preserve translation invariance. Similarly, dropout is applied via \texttt{Dropout2d}, which zeroes entire channels using a spatially constant mask, and thus also commutes with translations for any fixed mask.
\end{itemize}

In conclusion, the architecture used is indeed equivariant with respect to translations 
\( T = H^a \circ V^b \) for shifts \( a, b \) that are multiples of 8 pixels.

\paragraph{Training:}
The U-Net was trained using the mean squared error (MSE) as the loss function. 
Optimization was performed using the Adam optimizer ($\eta = 0.0004$), combined with a \texttt{ReduceLROnPlateau} scheduler 
(factor $0.5$, patience $8$, threshold $10^{-6}$) to improve convergence speed. 
During training, both predictions and targets were cropped to the supervision region, where the loss was evaluated. 
The best model was selected based on the lowest validation loss, with early stopping applied once the validation loss dropped below $10^{-10}$. 
After training, the selected model was evaluated on the test set. 
For all experiments, the neural network was trained for 80 epochs.

\begin{figure}[h!]
    \begin{subfigure}
    \centering
    \includegraphics[width=0.235\linewidth]{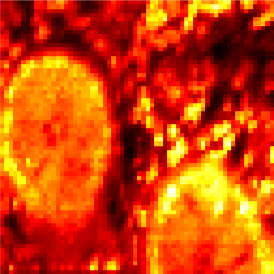}
    \end{subfigure}
    \begin{subfigure}
    \centering
    \includegraphics[width=0.235\linewidth]{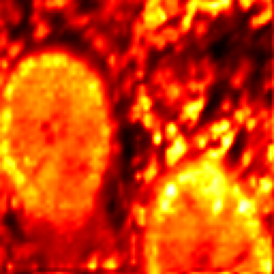}
    \end{subfigure}
    \begin{subfigure}
    \centering
    \includegraphics[width=0.235\linewidth]{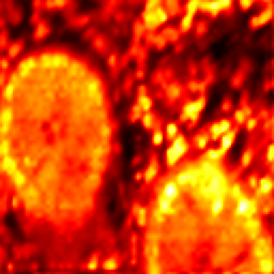}
    \end{subfigure}
    \begin{subfigure}
    \centering
    \includegraphics[width=0.235\linewidth]{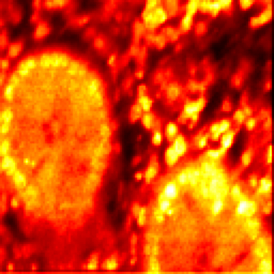}
    \end{subfigure}

    \begin{subfigure}
    \centering
    \includegraphics[width=0.235\linewidth]{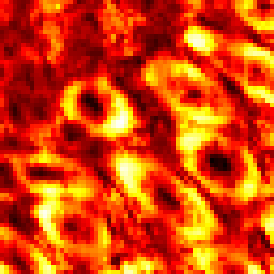}
    \end{subfigure}
    \begin{subfigure}
    \centering
    \includegraphics[width=0.235\linewidth]{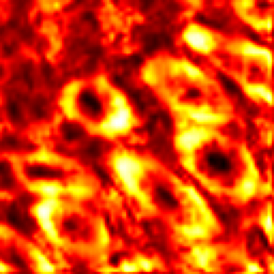}
    \end{subfigure}
    \begin{subfigure}
    \centering
    \includegraphics[width=0.235\linewidth]{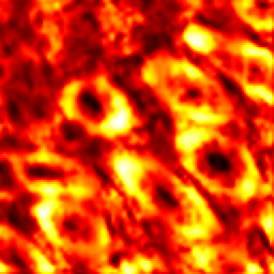}
    \end{subfigure}
    \begin{subfigure}
    \centering
    \includegraphics[width=0.235\linewidth]{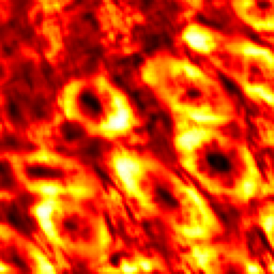}
    \end{subfigure}

    \begin{subfigure}
    \centering
    \includegraphics[width=0.235\linewidth]{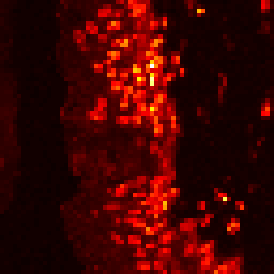}
    \end{subfigure}
    \begin{subfigure}
    \centering
    \includegraphics[width=0.235\linewidth]{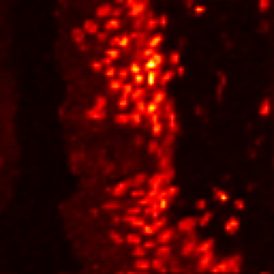}
    \end{subfigure}
    \begin{subfigure}
    \centering
    \includegraphics[width=0.235\linewidth]{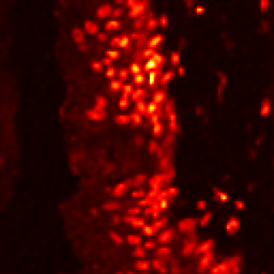}
    \end{subfigure}
    \begin{subfigure}
    \centering
    \includegraphics[width=0.235\linewidth]{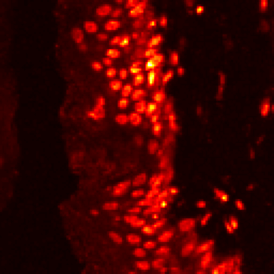}
    \end{subfigure}
\caption{Visualization of the performance of a neural network trained with sparse-dense training images. First column: Image of measured pixels only with size $64\times64$ pixels. Second column: Output image of a network trained with sparse-dense training images. Third column: Output image of a network trained fully supervised. Fourth column: Ground truth image of size $128\times128$ pixels.}
\label{Ex:1}
\end{figure}

\subsection{Numerical Studies}

The investigated samples consist of human lung tissue sections. As they originate from different anatomical regions, they exhibit considerable variability in both cellular architecture and structural organization. For the visualization and test set, we selected images that capture this variability as comprehensively as possible. The  data set is split into 154 images for the training set and 64 images for the validation set. The full-resolution images show the absorption contrast of the sample surfaces and are of size $128\times128$ pixels.

\paragraph{Evaluation against supervised learning:}
Following the sparse-dense sampling from above, we only need to generate sparsely sampled images $X_\Omega$ and supervision images $X_B$ of each of the tissues. This saves $9/16$ of measurements compared to the fully supervised training method and only relies on one quadrant of full measurements. Pixels that are not measured in this procedure are substituted with zeros. As demonstrated in Figure~\ref{Ex:1}, the proposed method performs nearly as well as the fully supervised one.

\paragraph{Decrease of supervision area:}
In theory, the size of the supervision set $B$ can be chosen arbitrarily small as long as the underlying distribution of the images is equivariant to all translation operators $T_l$ such that $\sum_{\ell=1}^LT_\ell^{-1}\big(M_B\odot T_\ell(X)\big)=X$. Since the restoration function is also required to be translation-equivariant, and the architecture of our neural network guarantees this property only for multiples of $8$ pixels, we cannot restrict the supervision patch further than that, while strictly sticking to theory. Therefore, we choose
\begin{align*}
I &\coloneqq \set{1, \dots, N} \times \set{1, \dots, N},\\ 
\Om &\coloneqq \set{1, 3, \dots, N-1} \times \set{1, 3, \dots, N-1}, \\
B &\coloneqq \set{1, \dots, N/16} \times \set{1, \dots, N/16} \,,
\end{align*}
and assume the distribution to be equivariant to all translations
\(T_{\ell, k}= H^{\ell \cdot N/16} \circ V^{k \cdot N/16}\)
for $\ell,k\in\{0,\ldots,15\}$. This corresponds to a supervision patch of $8\times8$ pixels. As Figure~\ref{Ex:2} shows, it performs similarly well as a network trained with a $64$ times larger supervision patch.  Deviating from the strict theoretical requirements, we obtained surprisingly good results even when choosing $B$ to be the bare minimum,
\( B \coloneqq \set{1, \dots, N/64} \times \set{1, \dots, N/64} \), 
which corresponds to a patch of $2 \times 2$ pixels. This minimal choice still yields satisfactory results, as also illustrated in Figure~\ref{Ex:2}. The error plot in Figure~\ref{fig:error} illustrates the performance of neural networks trained with varying supervision set sizes. The results indicate that the network performance remains largely unchanged up to a supervision set size of $4 \times 4$ pixels.

\paragraph{Fixed number of supervision pixels:}
Furthermore, it is interesting to examine how the error changes when varying the size of the supervision set while keeping the total number of supervised pixels approximately constant by adjusting the number of training images. As illustrated in Figure~\ref{fig:error_2}, the error increases as the number of training images decreases, even though the overall number of supervised pixels remains more or less unchanged. This observation provides an initial indication of the relevance of pixels outside the supervision patch, as will be discussed below.
\begin{figure}

    \begin{subfigure}
    \centering
    \includegraphics[width=0.235\linewidth]{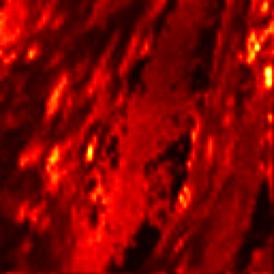}
    \end{subfigure}
    \begin{subfigure}
    \centering
    \includegraphics[width=0.235\linewidth]{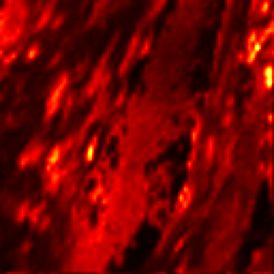}
    \end{subfigure}
    \begin{subfigure}
    \centering
    \includegraphics[width=0.235\linewidth]{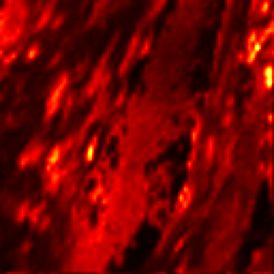}
    \end{subfigure}
    \begin{subfigure}
    \centering
    \includegraphics[width=0.235\linewidth]{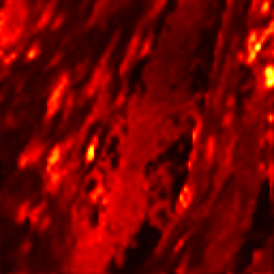}
    \end{subfigure}
 
    \begin{subfigure}
    \centering
    \includegraphics[width=0.235\linewidth]{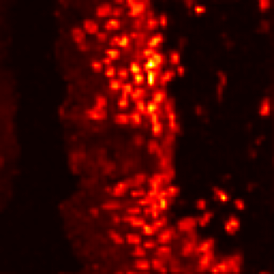}
    \end{subfigure}
    \begin{subfigure}
    \centering
    \includegraphics[width=0.235\linewidth]{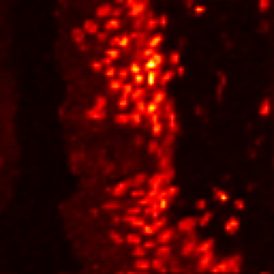}
    \end{subfigure}
    \begin{subfigure}
    \centering
    \includegraphics[width=0.235\linewidth]{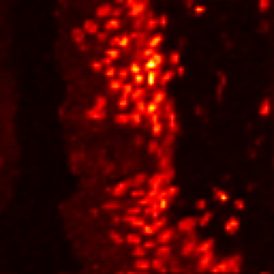}
    \end{subfigure}
    \begin{subfigure}
    \centering
    \includegraphics[width=0.235\linewidth]{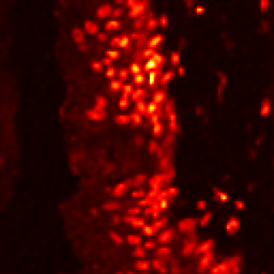}
    \end{subfigure}
\caption{Visualization of the performance of neural networks trained with different supervision patch sizes. First column: Output image of a network trained with a supervision patch of size $2\times2$ pixels. Second column: Output image of a network trained with a supervision patch of size $8\times8$ pixels. Third column: Output image of a network trained with a supervision patch of size $64\times64$ pixels. Fourth column: Output image of a network trained fully supervised.}
\label{Ex:2}
\end{figure}
\begin{figure}
    \centering
    \includegraphics[width=0.8\linewidth]{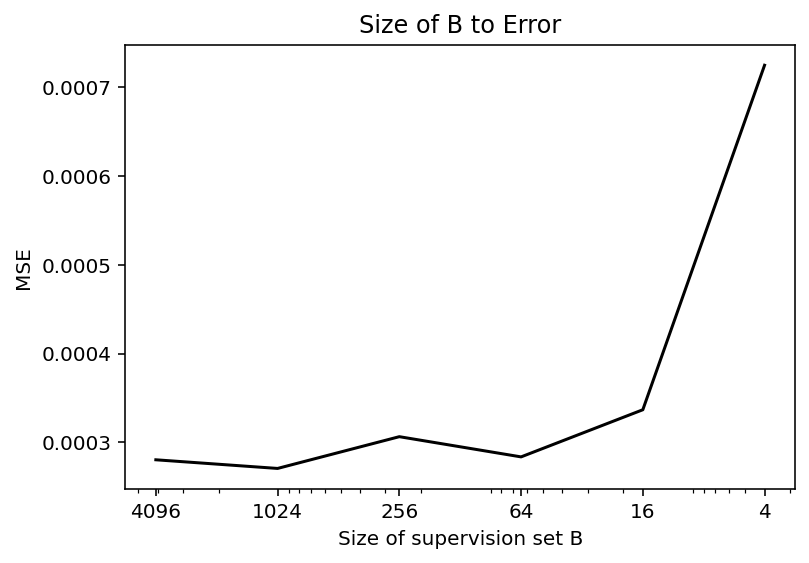}
    \caption{Relationship between the size of the supervision set $B$ and the mean squared error (MSE). A reduction in $B$ leads only to a small increase in error until a supervision set size of $4\times4$ pixels. The reported values represent the mean across $5$ different test images.}
    \label{fig:error}
\end{figure}
\begin{figure}
    \centering
    \includegraphics[width=0.8\linewidth]{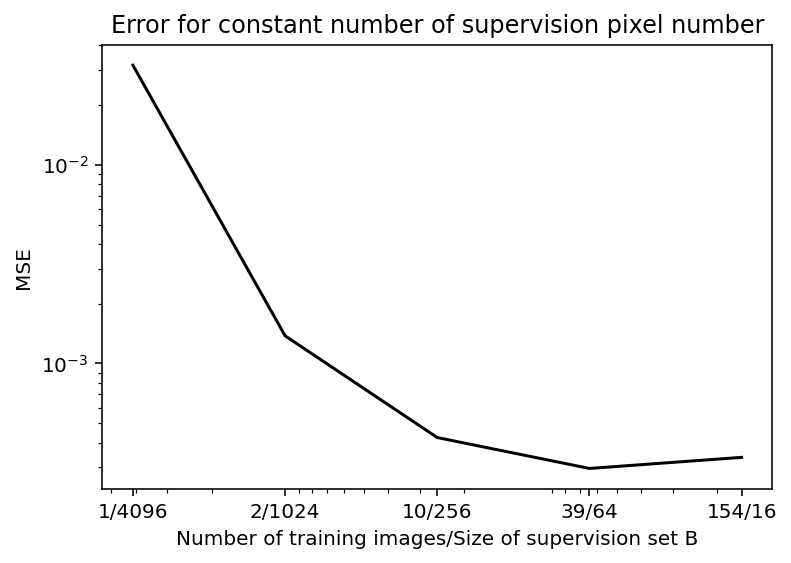}
    \caption{Mean squared error (MSE) as a function of the number of training images for a constant total number of supervised pixels. The reported values represent the mean across $5$ different test images.}
    \label{fig:error_2}
\end{figure}

\paragraph{Evaluation against patch-wise upsampling:}
 Naturally, the question arises whether it is necessary to measure pixels outside of the supervision patch or if it is possible to train a network purely with the information of the supervision patch. Our network architecture is translation-equivariant by construction and independent of the input size. 
Consequently, training can be performed by providing a downsampled version of a patch as input, while the loss is computed by comparing the output with the corresponding ground-truth supervision patch. During inference, the same filters are applied across the entire image in a single pass. This procedure is equivalent to a sliding-window evaluation, but computationally more efficient. The main limitation is that the effective receptive field restricts the available context.
\begin{table}
    \centering
    \begin{tabular}{l|c|c|c}
        & MSE & SSIM & PSNR \\
        Fully supervised & $2,0\cdot10^{-4}\pm1,5\cdot10^{-4}$ & $0,968\pm0.011$ & $38,3\pm3,8$\\
        \hline
        \hline
        Bilinear interpolation & $6,1\cdot10^{-4}\pm6,5\cdot10^{-4}$ & $0,919\pm0,033$ & $34,4\pm5,4$ \\
        \hline
        Patch supervised & $3,6\cdot10^{-4}\pm3,2\cdot10^{-4}$ & $0,947\pm0,014$ & $35,9\pm4,1$ \\
        \hline
        Sparse-dense & $\mathbf{2,8\cdot10^{-4}\pm 2,5\cdot10^{-4}}$ & $\mathbf{0,961\pm0,008}$ & $\mathbf{36,7\pm 3,6}$
    \end{tabular}
    \caption{Comparison of the mean squared error (MSE), structural similarity index measure (SSIM), and peak signal-to-noise ratio (PSNR) \cite{Dohmen2025} between bilinear interpolation and two neural networks: one trained using the proposed method and the other trained solely with information from the supervision patches. The reported values represent the mean across $5$ different test images}
    \label{Tab:1}
\end{table}

\begin{figure}
    \begin{subfigure}
    \centering
    \includegraphics[width=0.235\linewidth]{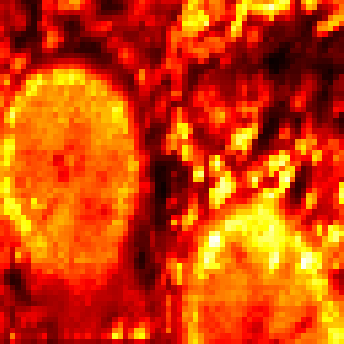}
    \end{subfigure}
    \begin{subfigure}
    \centering
    \includegraphics[width=0.235\linewidth]{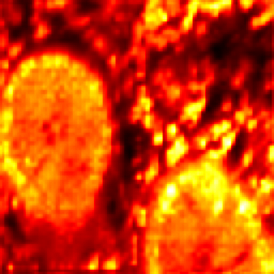}
    \end{subfigure}
    \begin{subfigure}
    \centering
    \includegraphics[width=0.235\linewidth]{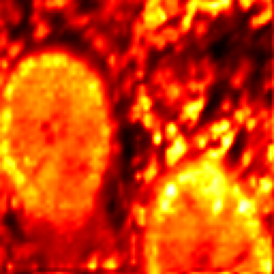}
    \end{subfigure}
    \begin{subfigure}
    \centering
    \includegraphics[width=0.235\linewidth]{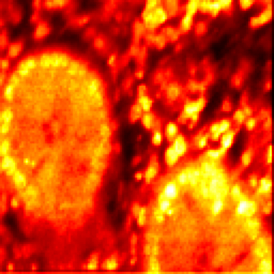}
    \end{subfigure}

    \begin{subfigure}
    \centering
    \includegraphics[width=0.235\linewidth]{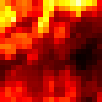}
    \end{subfigure}
    \begin{subfigure}
    \centering
    \includegraphics[width=0.235\linewidth]{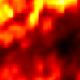}
    \end{subfigure}
    \begin{subfigure}
    \centering
    \includegraphics[width=0.235\linewidth]{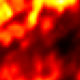}
    \end{subfigure}
    \begin{subfigure}
    \centering
    \includegraphics[width=0.235\linewidth]{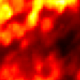}
    \end{subfigure}

    \begin{subfigure}
    \centering
    \includegraphics[width=0.235\linewidth]{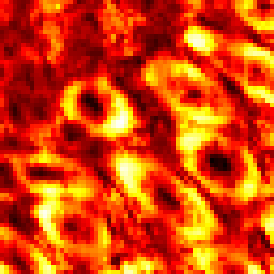}
    \end{subfigure}
    \begin{subfigure}
    \centering
    \includegraphics[width=0.235\linewidth]{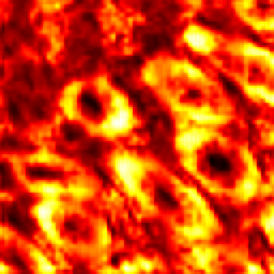}
    \end{subfigure}
    \begin{subfigure}
    \centering
    \includegraphics[width=0.235\linewidth]{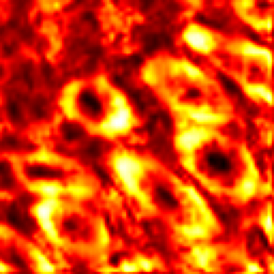}
    \end{subfigure}
    \begin{subfigure}
    \centering
    \includegraphics[width=0.235\linewidth]{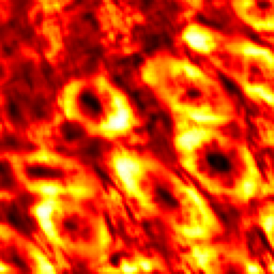}
    \end{subfigure}

    \begin{subfigure}
    \centering
    \includegraphics[width=0.235\linewidth]{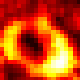}
    \end{subfigure}
    \begin{subfigure}
    \centering
    \includegraphics[width=0.235\linewidth]{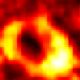}
    \end{subfigure}
    \begin{subfigure}
    \centering
    \includegraphics[width=0.235\linewidth]{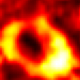}
    \end{subfigure}
    \begin{subfigure}
    \centering
    \includegraphics[width=0.235\linewidth]{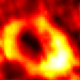}
    \end{subfigure}
\caption{Evaluation against patch-wise image restoration. Rows 2 and 2 are zoomed in versions of the images in the rows 1 and 3. First column: Image of measured pixels only with size $64\times64$ pixels. Second column: Output image of a network trained supervised on patches of size $8\times 8$. Third column: Output image of a network trained with a supervision patch of size $8\times8$ pixels. Fourth column: Ground truth image of size $128\times128$.}
\label{Ex:3}
\end{figure}

Starting with a supervision patch of size $2 \times 2$, it is not even possible to train the network we chose, as its architecture includes three downsampling operations. One could, in principle, employ a different architecture with only a single downsampling step. However, this would lead to substantially higher computational costs, since the resulting feature maps would be significantly larger. For this reason, we begin by comparing networks trained with supervision patches of size $8 \times 8$ pixels. In this setting, we already observe superior performance for networks trained on sparse-dense images. The network’s field of view is much bigger in this case, which has a pronounced impact on the overall performance of the neural networks. Although the differences may appear minor on the compared images in lines one and three of Figure~\ref{Ex:3}, one can clearly see that the network trained solely on patches struggles to get rid of the pixel structure of the input image and is far from the performance our proposed method is capable of in the zoomed in versions. This is also statistically shown in Table~\ref{Tab:1}, where both methods are compared with bilinear interpolation using three different metrics. Since the samples, and consequently the corresponding images, exhibit substantial variability, we selected our test dataset to ensure that all image types are represented. As a result, the standard deviation is relatively high: images with fewer structural features are comparatively easy to reconstruct, whereas images containing dense cellular structures pose a greater challenge.

\paragraph{Superior performance of over patch-wise upsampling:}

The reason why the proposed method performs better than a network trained only with information of the supervised patch, is that it does not have to rely solely on local information. To illustrate this, we generate artificial images with size $16\times16$ pixels consisting of four quadrants. Each quadrant contains one of the following patterns:  
\begin{enumerate}[label=(P\arabic*)]
    \item horizontal lines,  \label{P1}
    \item vertical lines,  \label{P2}
    \item a checkerboard pattern,  \label{P3}
    \item completely black with a white square in the upper left corner. \label{P4}
\end{enumerate}

The pattern of each quadrant is selected with equal probability. However, the choice is not independent: if a quadrant displays pattern \ref{P4}, then the quadrant horizontally next to it shows pattern \ref{P1}. The diagonal one shows pattern \ref{P2} and the quadrant vertically next to is shows pattern \ref{P3}. Consequently, once the pattern of a single quadrant is determined, the patterns of all remaining quadrants are uniquely fixed. An example is shown in Figure~\ref{Ex:4} (image 1 and 2). Since all quadrants are equally distributed, the underlying distribution of the images is invariant to $T_1 = \id$, $T_2 = H^{N/2}$, $T_3 = V^{N/2}$, $T_4 = H^{N/2} \circ V^{N/2}$.
We use  $\Om  = \set{1, 3, \dots, N-1}^2$ for the downsampling set and  
$B = \set{1, \dots, N/2}^2$ for the supervision set.
In particular, the downsampled versions of patterns \ref{P1}, \ref{P2}, \ref{P3} coincide. As shown in images 3 and 4 in Figure~\ref{Ex:4} the sparse-dense training data perfectly recovers the structure of the whole image, while  the network trained solely on patches completely fails to recover the missing parts.

\begin{figure}
    \begin{subfigure}
    \centering
    \colorbox{black!50}{\includegraphics[width=0.22\linewidth]{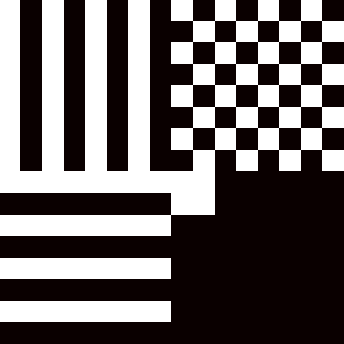}}
    \end{subfigure}
    \begin{subfigure}
    \centering
    \colorbox{black!50}{\includegraphics[width=0.22\linewidth]{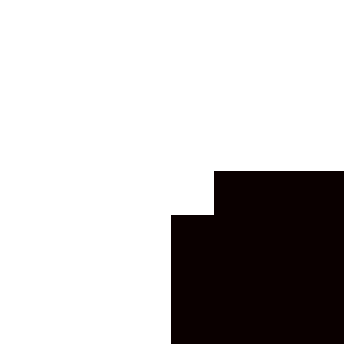}}
    \end{subfigure}
    \begin{subfigure}
    \centering
    \colorbox{black!50}{\includegraphics[width=0.22\linewidth]{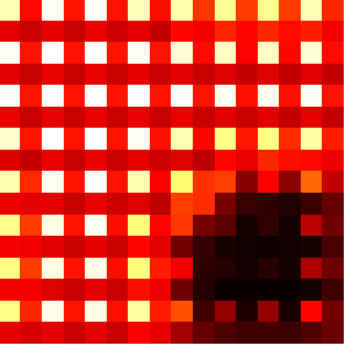}}
    \end{subfigure}
    \begin{subfigure}
    \centering
    \colorbox{black!50}{\includegraphics[width=0.22\linewidth]{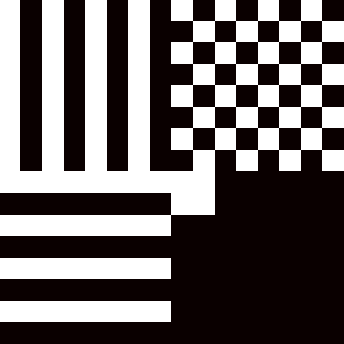}}
    \end{subfigure}
\caption{Visualization of superior performance of sparse-dense sampling over patch-wise supervised learning. From left to right: ground truth (image 1); downsampled image (image 2); upsampling with a patch-based supervised approach (image 3); upsampling with sparse-dense sampling (image 4).}
\label{Ex:4}
\end{figure}

\section{Conclusion}
\label{sec:conclusion}

In this work, we developed a locally supervised upsampling method that exploits invariances naturally present in images for global image restoration. 
The key idea behind local supervision is to leverage these invariances across different regions of an image. 
This implies that supervision data are required only on a small subregion of the full image. 
Unlike a purely patch-wise strategy, the sparse-dense approach allows the network to learn global information beyond individual patches. 
We demonstrate the effectiveness of this method for sparse-dense sampling in OR-PAM, clearly showing the superiority of the sparse-dense strategy.

Future work will focus on studying the influence of noise and exploring other invariances, such as rotation and mirroring. {\color{black} The experimental realization currently relies on translation equivariance, and extensions to rotations and other transforms would likely require different architectures and strategies.} To investigate the boundaries of our method, we will also examine how far the sampling rate can be reduced; for example, recording only every third or fourth pixel in the observation and reducing the size of the supervision region.  
Finally, we plan to extend our method to random sampling and self-supervision with noisy data, where not all samples are available.

\section{Acknowledgments}

This work has been supported by the Austrian Research Funding Association (FFG) within the Project \textit{Compressed Photoacoustic Remote Sensing} (project number FO999898886).

\end{document}